\title{Phrase-Level Class based Language Model for Mandarin Smart Speaker Query Recognition}
\name{Yiheng Huang$^{*}$\thanks{*The first two authors contribute equally.}, Liqiang He$^{*}$, Guangsen Wang, Lei Han and Dan Su}
\address{
  Tencent AI Lab
  }
\email{\{arnoldhuang,andylqhe,vincegswang,lxhan,dansu\}@tencent.com}
\begin{document}
\maketitle  
\newtheorem{definition}{Definition}
\newtheorem{theorem}{Theorem}
\newtheorem{proof}{Proof}

\begin{abstract}
The success of speech assistants requires precise recognition of a number of entities on particular contexts. A common solution is to train a class-based n-gram language model and then expand the classes into specific words or phrases. However, when the class has a huge list, e.g., more than 20 million songs, a fully expansion will cause memory explosion. Worse still, the list items in the class need to be updated frequently, which requires a dynamic model updating technique. In this work, we propose to train pruned language models for the word classes to replace the slots in the root n-gram. We further propose to use a novel technique, named Difference Language Model (DLM), to correct the bias from the pruned language models. Once the decoding graph is built, we only need to recalculate the DLM when the entities in word classes are updated. Results show that the proposed method consistently and significantly outperforms the conventional approaches on all datasets, esp. for large lists, which the conventional approaches cannot handle.
\end{abstract}
\noindent\textbf{Index Terms}: speech recognition, song names, FSTs, language model, on the fly re-scoring

\section{Introduction}

Speech Assistants have recently gained its popularity in human's daily lives. Representative product in current market include the Amazon's Echos, Apple's Siri, Google Assistant, etc. In the according scenarios, the user queries usually contain explicit patterns such as ``Play \$SONG NAME\$'', ``I want to watch \$VIDEO NAME\$''Recognizing these names as the entities is challenging, and these semantic patterns are usually not well modelled in the general language model. 
On the other hand, new songs and videos arrive everyday, and it is expensive and impractical to frequently update the general language models daily to capture these new entities. Worse still, incorrectly recognized entities might raise strong negative impacts to the user experience. 
We refer this problem to as the ``hot'' word problem. One natural solution to this problem is~\cite{Aleksic2015}, in which a class based language model was first trained and was then converted to a weighted finite state transducer (WFST)~\cite{Mohri02} as the root-grammar. 
The entity names in the root-grammar were then replaced by the WFST generated from contact names. However, if the number of entities is large, directly replacing them into the root-grammar will lead to memory problems and is impossible to be compiled into a decoder graph.  
For example, if there are $n$ n-gram items containing the \$SONG-SLOT\$ in the root grammar and the size of the \$SONG\$ language model is $m$, replacing these n-grams will result in a language model with size $n \times m$. In our application, $n$ and $m$ are typically $10K$ and $20M$, respectively. Thus, the resulted language model will be of magnitude $200G$. This problem is referred to as the ``size exploding'' problem.

For Mandarin speech assistants, an additional challenge is that the word segmentation quality has a strong impact on the recognition performance, as the language models trained on the word-level are superior than the ones trained on the character-level. Generally speaking, the segmentation quality relies on the vocabulary. Thus, the vocabulary of a single general language model may not work well to special domains, such as song names, singer names and video names. This makes it necessary to use different vocabularies to model the queries related to these special domains. Combining multiple vocabularies is another major motivation of our work.

To this end, a phrase-level class-based language model is proposed in this work. The entities, for example, song names, video names and singer names are trained as n-gram models with different vocabularies which are suitable for their own tasks. 
In addition, these language models can be heavily pruned so that replacing the root-grammar will not blow up the language model. To address the ``size exploding'' problem, for decoding, we adopted the on-the-fly re-scoring in~\cite{Hori07} via the proposed \emph{difference language model} to get more accurate language model scores.
The root-grammar is a general language model with entities in classes to be replaced by their class names, which is trained from a large text corpus. 
For the sub-grammars such as song names, they can be trained very efficiently since the size is smaller than the root grammar in orders of magnitudes. By using the proposed method, once the decoder graph is built, we only need to recalculate the DLMs when the entities in word classes are updated while leaving the decoder graph stays unchanged. Because the calculations of DLMs are very fast, therefore, our system can be easily updated with the new items at minutely basis, thus addressing the ``hot'' word problem.

The proposed phrase-level class based language model is evaluated on our own dataset, namely, `speaker\_201812', `speaker\_201901'. They are voice search to the internal intelligent speaker song retrieval task, which consists of real user song queries containing both the wake-up words and the query utterances. 
Experiment results show that after interpolating with the general language models, significant gains can be obtained on the song retrieval task compared to the general language model with no loss of performance on the general recognition task such as reading speech, etc.
Our work bears some similarities with some recent works \cite{Hall2015, Aleksic2015-2, Vasserman2016, Velikovich18}. 
%In \cite{Aleksic2015}, a class-based language model was first trained and converted to a weighted finite state transducer (WFST)~\cite{Mohri02} as the root grammar. The contact names in the root grammar were then replaced by the name WFST. Finally the root grammar WFST after replacing contact names is used to build the decoding graph. 
In \cite{Hall2015, Aleksic2015-2}, a biasing language model trained with the latest text inputs or queries is used to bias the scores of the general language model to better handle the recent trendy search queries. 
Furthermore, in \cite{Velikovich18}, semantic information was augmented to the language model by re-scoring the lattices from the first-pass decoding. With a powerful semantic model, significant gains can be obtained as shown in their work. The main differences of our work is that different classes of entities have their own vocabularies, and on the fly re-scoring is performed on the corresponding \textit{root-grammar} or \textit{sub-grammar} individually for every word. However, in \cite{Hall2015} they only re-score phrases in a pre-defined set. What is more, the decoder graph need to be built only once, when new entities are updated, only the DLMs correspond to sub grammars need to be rebuilt, that is a small size problem. e.g., in a magnitude of $100M$.

\label{MainSec1}

\section{ Difference Language Model}
In this section, we introduce the \emph{phrase-level class based language model} and the \emph{difference language models}. 
\label{MainSec2}

\subsection{Phrase-Level Class based Language Model}
\label{section1}

In the pioneer paper \cite{Brown1992-CB0}, the authors describe the class based language model on the word level. Instead, we propose phrase-level class based language model.

%\begin{itemize}
%\item Proceedings will be printed in DIN A4 format. Authors must submit their papers in DIN A4 format.
%\end{itemize}

\subsubsection{Formulation}

Assume there is a vocabulary \textbf{V} and a set of classes \{${\mathcal{C}}_1,{\mathcal{C}}_2,...,{\mathcal{C}}_m$\}, and each word in \textbf{V} can only belongs to one of these classes. Given a word sequence w$_1$, w$_2$, ..., w$_n$ generated from the vocabulary, there exists a partition $\pi$ that separates a word sequence into a sequence of continuous phrases $W_1, W_2, ..., W_{\pi_(n)}$. We assume that the words appeared in one phrase are from the same class, and the words in any two neighbored phrases are from distinct classes. 
Denote the class label for phrase $W_i$ as $C_i$, where $C_i\in\{\mathcal{C}_1,\mathcal{C}_2,...,\mathcal{C}_m\}$. The probability of the word sequence P(w$_1$w$_2$...w$_n$) can be rewritten to P($W_1W_2...W_{\pi(n)}$) which can be decomposed as:
\begin{equation}
    P(W_1^{\pi(n)}) = P(W_1)P(W_2|W_1)\cdot\cdot\cdot P(W_{\pi(n)}|W_1^{\pi(n)-1})
    \label{equt1}
\end{equation}
Followed the definition of word class n-gram models as defined in \cite{Brown1992-CB0}, we give our new definition.
\begin{definition}
A language model is a \emph{phrase level class based language model}, if $P(W_k|W_1^{k-1}) = P(W_k|C_k)P(C_k|C_1^{k-1})$, $1\leq k \leq \pi(n)$, where $C_k$ is the class of phrase $W_k$.
\label{Def1}
\end{definition}
In our model, each phrase $W_i$ can only belong to exactly one class $\mathcal{C}_i$, so $P(W_i|\mathcal{C}_j)=0$ if $C_i\neq\mathcal{C}_j$.  The following theorem gives a sufficient condition for phrase level class based language model. 

\begin{theorem}
Assume there are $m$ classes and Eqs. (\ref{eqCond1}) and (\ref{eqCond2}) hold, then the language model is a \textit{phrase level class based language model} in Definition 1.
\begin{equation}
P(W_k|C_k, W_1^{k-1}) = P(W_k|C_k)
\label{eqCond1}
\end{equation}
\begin{equation}
P(C_k|W_1^{k-1}) = P(C_k|C_1^{k-1})
\label{eqCond2}
\end{equation}
\end{theorem}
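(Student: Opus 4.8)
The plan is to derive the factorization in Definition~\ref{Def1} directly from the product rule of probability, using a single structural observation: since every phrase belongs to exactly one class, the class label $C_k$ is a deterministic function of the phrase $W_k$, and hence the class history $C_1^{k-1}$ is a deterministic function of the phrase history $W_1^{k-1}$. Consequently, conditioning on $W_1^{k-1}$ is equivalent to conditioning on the pair $(W_1^{k-1},C_1^{k-1})$, and the event $\{W_k\}$ coincides with the event $\{W_k,C_k\}$ when $C_k$ is the class of $W_k$. I would state this observation as the very first line of the proof, appealing to the modelling assumption stated just before the theorem (each phrase lies in one class, neighbouring phrases in distinct classes), which is what makes the phrase-to-class map well defined on histories.

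Next I would fix $k$ with $1 \le k \le \pi(n)$ and, using that $C_k$ is determined by $W_k$, write
\[
P(W_k \mid W_1^{k-1}) = P(W_k, C_k \mid W_1^{k-1}).
\]
Then I would apply the product rule to factor the right-hand side,
\[
P(W_k, C_k \mid W_1^{k-1}) = P(W_k \mid C_k, W_1^{k-1})\, P(C_k \mid W_1^{k-1}),
\]
and substitute the two hypotheses: Eq.~(\ref{eqCond1}) collapses the first factor to $P(W_k \mid C_k)$, and Eq.~(\ref{eqCond2}) collapses the second to $P(C_k \mid C_1^{k-1})$. Chaining these equalities yields $P(W_k \mid W_1^{k-1}) = P(W_k \mid C_k)\, P(C_k \mid C_1^{k-1})$, which is exactly the defining identity of a phrase-level class based language model. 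For the base case $k=1$ the histories $W_1^{0}$ and $C_1^{0}$ are empty, and the same three steps give $P(W_1) = P(W_1 \mid C_1)\,P(C_1)$, so the identity holds for every $k$ in the range, and therefore the decomposition in Eq.~(\ref{equt1}) takes the class-based form.

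The argument is essentially a one-line application of the product rule, so I do not expect a real obstacle; the only delicate point is the first displayed equation, i.e. justifying that adjoining $C_k$ to $W_k$ (and, implicitly, reading $C_1^{k-1}$ off $W_1^{k-1}$) leaves every probability unchanged. This is precisely where the "one phrase, one class" assumption is used, together with the remark in the excerpt that $P(W_i\mid\mathcal{C}_j)=0$ when $C_i\neq\mathcal{C}_j$, which guarantees there is no ambiguity in the class attached to a phrase. I would make that dependence explicit rather than leave it implicit, since it is the whole content of the step; after that, the two hypotheses do the remaining work mechanically.
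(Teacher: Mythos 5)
Your proposal is correct and follows essentially the same route as the paper's proof: both augment $P(W_k\mid W_1^{k-1})$ to $P(W_k,C_k\mid W_1^{k-1})$ using the fact that each phrase belongs to exactly one class (the paper phrases this as a sum over the $m$ classes that collapses to a single term, while you invoke the deterministic phrase-to-class map directly), then apply the product rule and substitute the two hypotheses. Your version is, if anything, slightly more careful in making the ``one phrase, one class'' dependence explicit.
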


\begin{proof}
We have
\begin{align}
P(W_k|W_1^{k-1}) & = \sum_{1 \leq i \leq m}P(W_k,W_k \in \mathcal{C}_i|W_1^{k-1} ) \nonumber\\ 
                 & = P(W_k,C_k|W_1^{k-1} ) \nonumber\\ 
                 & = Pr(W_k|C_k, W_1^{k-1} ) \cdot P(C_k | W_1^{k-1})
\end{align}
and thus if Eqs. (\ref{eqCond1}) and (\ref{eqCond2}) hold, obviously,  $P(W_k|W_1^{k-1}) = P(W_k|C_k)P(C_k|C_1^{k-1})$ , which completes the proof.
\end{proof}

The condition in Eq. (\ref{eqCond1}) is a mild assumption under our cases. For example, $C_k$ can be chosen as the song class SONG-SLOT, and $W_1^{k-1}$ can be chosen as commands such as `listen to' or `play'. The probabilities $P(\text{`Bad\ romance'}|\text{SONG-SLOT}, \text{`play'})$ and $P(\text{`Bad\  romance'}|\text{SONG-SLOT}, \text{`listen\ to'})$ are equal to each other, regardless of which command is used. 
% This is a intuitive explanation of condition \ref{eqCond1}.

\subsection{The Difference Language Model}
\label{section2}
In the pioneer work \cite{Hori07}, the authors introduce an on-the-fly re-scoring framework. A small language model is used to build the decoder graph, and a larger language model is used to re-score the LM scores in the decoding progress simultaneously. In our work, following their idea, a \emph{Difference Language Model} (DLM) is devised for re-scoring. 
\begin{definition}
A language model $A$ is a DLM of $B$ and $C$, if given arbitrary history $H$ and word $w$, Eq. (\ref{equtdef2}) holds as below:
\begin{equation}
    logP_A(w|H) = logP_B(w|H) - logP_C(w|H)
    \label{equtdef2}
\end{equation}
\label{Def2}
\end{definition}
\begin{theorem}
Denote $B$ and $C$ as two back-off n-gram language models with the same vocabulary \textbf{V}. The set of n-gram entries (without probabilities and back-off coefficients) in $C$, denoted as ${Ngram}_C$, is a subset of ${Ngram}_B$, i.e., ${Ngram}_C \subset {Ngram}_B$, if a back-off n-gram language model $A$ satisfies $Ngram_A=Ngram_B$. In addition, for each n-gram $\{H,w\} \in Ngram_B$,
\begin{equation}
\log P_A(w|H) = \log P_B(w|H) - \log P_C(w|H),
\label{equt5}
\end{equation}
\begin{equation}
    \alpha_A(H) = \alpha_B(H) - \alpha_C(H),
\label{equt6}
\end{equation}
where $\alpha(H)$ is the back-off parameter of history $H$, and $\alpha_C(H)$ is zero if $H$ is not contained in $Ngram_C$. Then, the language model $A$ is a DLM of $B$ and $C$.
\label{theorem2}
\end{theorem}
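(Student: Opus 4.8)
The plan is to verify Definition~\ref{Def2} directly: to show that for \emph{every} history $H$ and word $w$ the back-off evaluation of the constructed model $A$ obeys $\log P_A(w|H) = \log P_B(w|H) - \log P_C(w|H)$, and not merely for the stored n-grams $\{H,w\}\in Ngram_B$, for which it holds by construction via Eq.~(\ref{equt5}). I would argue by induction on the length $|H|$ of the history, since the back-off recursion defining $P_A$, $P_B$ and $P_C$ strictly shortens the history at each step and bottoms out at the unigrams.

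For the base case $|H|=0$ every unigram $w$ of the shared vocabulary $\mathbf{V}$ is an explicit entry of both $B$ and $C$, hence --- because $Ngram_A=Ngram_B$ --- also of $A$, so Eq.~(\ref{equt5}) applies verbatim and yields the identity at order $0$. For the inductive step I would fix $H$ with $|H|=k\ge 1$, write $H'$ for $H$ with its oldest word removed, assume the identity for all histories of length $k-1$, and split on whether $\{H,w\}$ is stored. If $\{H,w\}\in Ngram_B$, then $\{H,w\}\in Ngram_A$, so $A$ returns its stored value, which by Eq.~(\ref{equt5}) equals $\log P_B(w|H)-\log P_C(w|H)$ (reading the right-hand side as the \emph{evaluated} scores of $B$ and $C$, whether or not $C$ itself stores $\{H,w\}$). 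If $\{H,w\}\notin Ngram_B$, then $Ngram_C\subset Ngram_B$ forces $\{H,w\}\notin Ngram_C$ and $\{H,w\}\notin Ngram_A$ as well, so all three models back off to $H'$ and $\log P_X(w|H)=\log\alpha_X(H)+\log P_X(w|H')$ for $X\in\{A,B,C\}$. Substituting Eq.~(\ref{equt6}) into the $A$-term and the inductive hypothesis into $\log P_A(w|H')$, then regrouping the four summands, gives $\log P_A(w|H)=\bigl[\log\alpha_B(H)+\log P_B(w|H')\bigr]-\bigl[\log\alpha_C(H)+\log P_C(w|H')\bigr]=\log P_B(w|H)-\log P_C(w|H)$, which closes the induction.

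The delicate point --- and what I expect to be the real crux, rather than any computation --- is keeping the three back-off recursions aligned along a common backed-off history, which is precisely what the hypothesis $Ngram_C\subset Ngram_B$ provides: it rules out the mismatched case in which $A$ (and $B$) back off while $C$ uses a stored entry, so both sides of the target identity always decompose in the same way. I would also be careful about the two conventions the statement relies on, namely that $\alpha_C(H)$ is the \emph{zero} log back-off weight (multiplicative~$1$) whenever $H$ is not a history occurring in $Ngram_C$, which one must check agrees with how $C$'s recursion actually treats such an $H$, and likewise that $\alpha_A(H)=\alpha_B(H)$ when $H\notin Ngram_B$, so that Eq.~(\ref{equt6}) is meaningful in every case. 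Once these conventions are pinned down and the common-history decomposition is in place, both branches of the induction are immediate.
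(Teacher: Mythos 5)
Your proposal is correct and follows essentially the same route as the paper's own proof: induction on the history/sequence length, with the base case at unigrams, a case split on whether $\{H,w\}$ is a stored entry of $B$, and the subset condition $Ngram_C \subset Ngram_B$ used to keep all three back-off recursions aligned so that Eq.~(\ref{equt6}) and the inductive hypothesis combine to give Eq.~(\ref{equtdef2}). Your added care about reading $\log P_C(w|H)$ in Eq.~(\ref{equt5}) as the evaluated (possibly backed-off) score, and about the zero-back-off convention for histories absent from $Ngram_C$, makes explicit two points the paper's proof passes over silently but does implicitly rely on.
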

\begin{proof}
We prove the theorem by deduction. For uni-grams $\forall w \in \textbf{V}$, Eq. (\ref{equtdef2}) holds trivially. Assume Eq. (\ref{equtdef2}) holds for any sequence with length no longer than $l$, and $\{H,w\}$ is a sequence of length of $l+1$. Supposing $H=\{h_1,\cdots,h_l\}$, denote $H'=\{h_2,\cdots,h_l\}$ as the suffix of $H$. 
If $\{H,w\} \in Ngram_B$, Eq. (\ref{equtdef2}) holds by definition. If $\{H,w\} \notin Ngram_B$, we have $\{H,w\} \notin Ngram_C$ since $Ngram_C \subset Ngram_B$. According to the definition of the back-off language model, $\log P_B(w|H) - \log P_C(w|H) = \log P_B(w|H') + \alpha_B(H) - \log P_C(w|H') - \alpha_C(H)$. Since $\log P_B(w|H') - \log P_C(w|H') = \log P_A(w|H')$ by the assumption and $\alpha_A(H) = \alpha_B(H) - \alpha_C(H)$ (when $H \notin Ngram_B$,  $\alpha_A(H) = \alpha_B(H) = \alpha_C(H) = 0$), thus $\log P_A(w|H) = \alpha_A(H)+\log P_A(w|H') = \log P_B(w|H) - \log P_C(w|H)$. So, Eq. (\ref{equtdef2}) holds for $\{H,w\}$ with length $l+1$, and we complete the proof by deduction. 
\end{proof}
% Obviously, theorem \ref{theorem2} also illustrates the method for building difference language models. 
From Theorem \ref{theorem2}, we can directly compute the DLM.

\section{Dynamic Decoding}
\label{MainSec3}
In this section, we briefly introduce the DLMs, decoder graph, and the on-the-fly re-scoring progress.
\subsection{The Language Model}
The proposed phrase-level class based language model consists of a root grammar model and a sub-grammar model:
\begin{itemize}
\item the root grammar is the language model characterized by $P(C_k|C_1^{k-1})$;
\item the sub-grammar is responsible for modeling the emitting probabilities $P(W_k|C_k)$ of entities $W_k$ given the class $C_k$.
\end{itemize}

We refer the classes to as ``slots'' denoted by SLOT-NAME.  The root grammar is trained by SRILM \cite{Stockle02} from various corpora containing training sentences such as `play SONG-SLOT', with the concrete entities substituted by their class names. 
The corpora are from manual transcripts of online search queries. 
Each word in these sentences is treated as an individual class. 
% For example, word `play' stands for a class \{``play''\} \textcolor{red}{contains} a singleton in it. 
Each word in the root grammar vocabulary is concatenated with a prefix `class\_', e.g., word `play' is replaced by `class\_play'. The  root grammar vocabulary contains 213893 ordinary Chinese words plus 3 extra words correspond to the sub classes (i.e., song, singer and video). Furthermore, the root grammar can be interpolated with the general language model trained from other sources such as news and conversations, etc. Finally, the model is pruned and the corresponding DLM is built using the method in section \ref{MainSec2}. 

We build three sub-grammar models based on their distinct databases. Once the sub-grammars are built, they are directly pruned to arbitrary smaller n-grams (at the extreme cases, that is the uni-gram), and the DLMs are built accordingly. In our databases, more than 20 million songs and 1 million videos are available, and there is a list of more than 200 thousands of singers. The vocabulary sizes corresponding to the classes SONG, VIDEO and SINGER are 50687, 43797 and 17717, respectively. The names of the words are denoted by appending the class name as a prefix, e.g., the word $w$ in class SONG-SLOT is denoted by SONG-SLOT\_$w$.

Finally, the DLMs of both the root grammar and sub-grammars can be obtained according to Theorem \ref{theorem2}. These n-grams are then converted to a tree structure as described in~\cite{Soltau09}, where the language model state is encoded as a 64-bit value. The state contains the information of the
depth of the n-gram tree and n-gram history. Based on the information, it is straightforward to look up the n-gram scores for the incoming words in the n-gram tables.

\label{section3.1}

\subsection{The Decoder Graph}
The language models are converted to WFST format using Kaldi~\cite{Kaldi} and  the replacements of sub-grammars are conducted using OpenFst \cite{OpenFst}. Some example WFSTs are shown in Figs.~\ref{fig:WFST_root}, \ref{fig:WFST_song} and \ref{fig:WFST3}.

\begin{figure}[t]
  \centering
  \includegraphics[width=\linewidth]{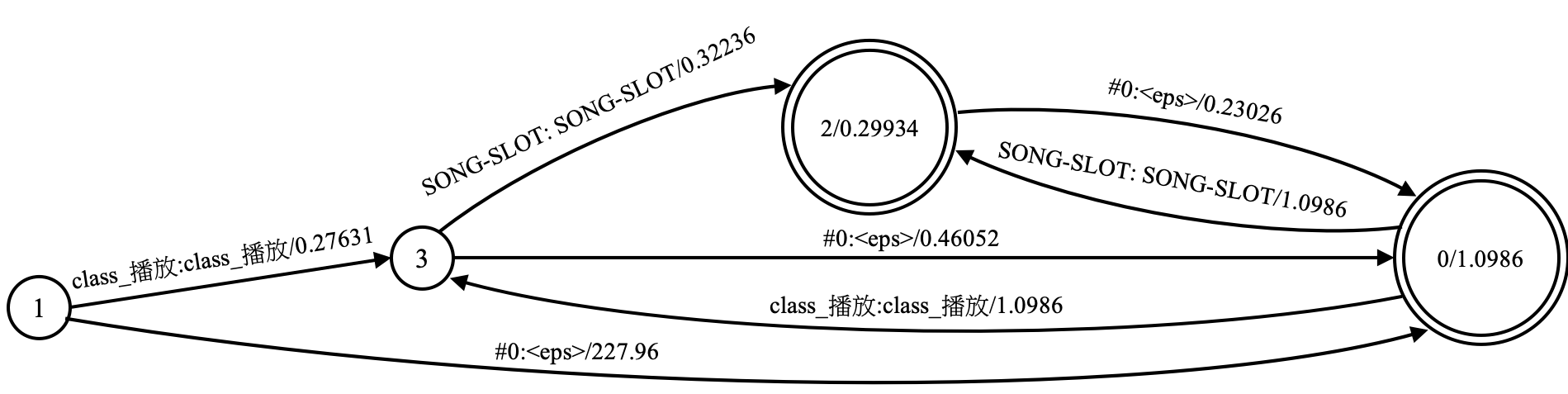}
  \caption{WFST of root grammar}
  \label{fig:WFST_root}
\end{figure}
\begin{figure}[t]
  \centering
  \includegraphics[width=\linewidth]{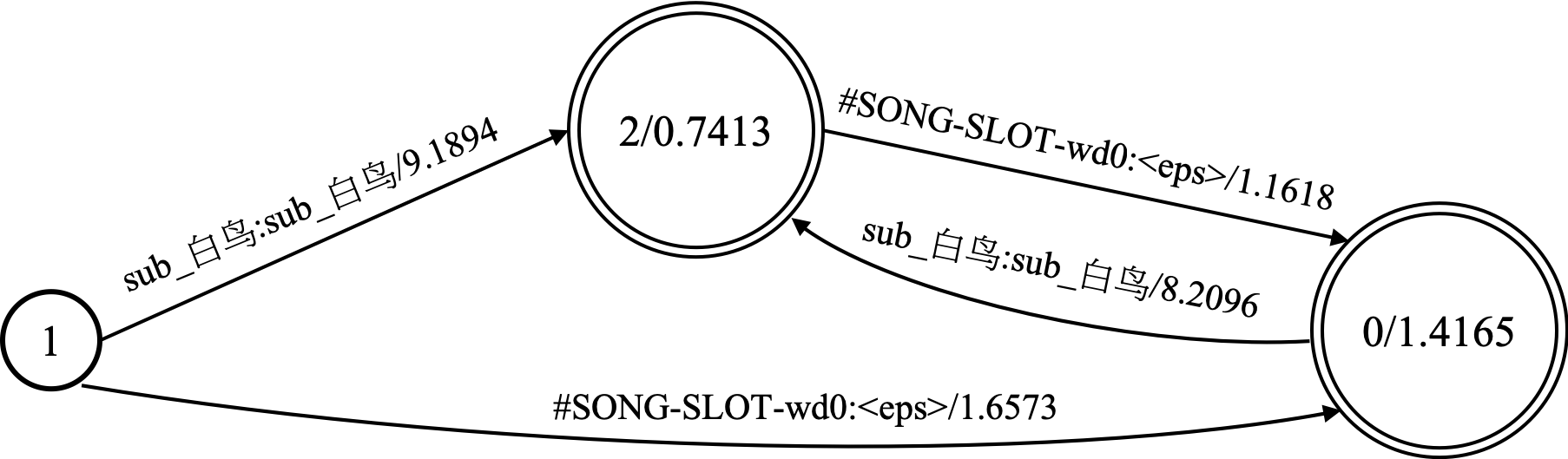}
  \caption{WFST of sub grammar}
  \label{fig:WFST_song}
\end{figure}
\begin{figure*}[t]
  \centering
  \includegraphics[width=\linewidth]{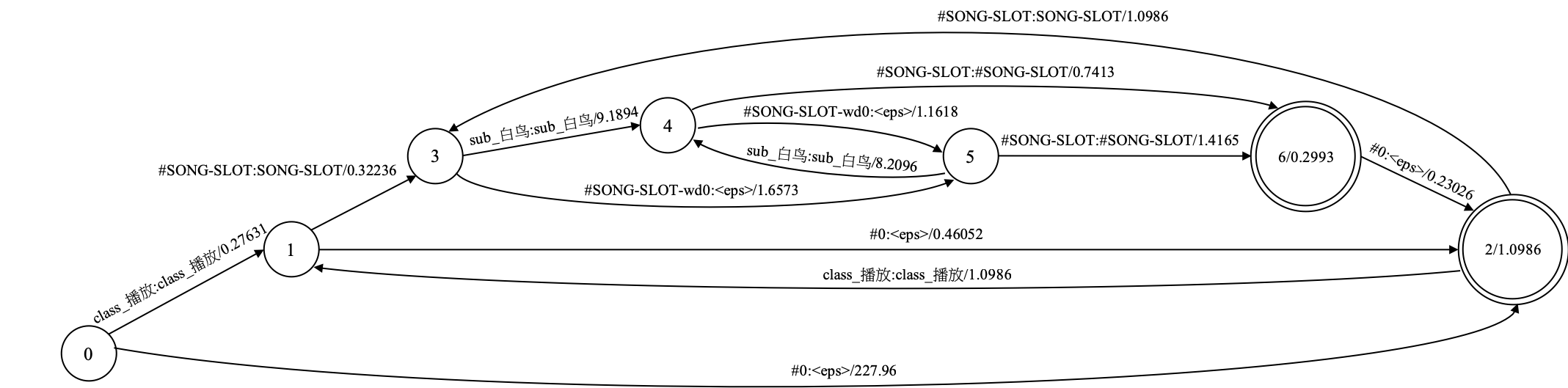}
  \caption{The replaced WFST}
  \label{fig:WFST3}
\end{figure*}
% \sout{Note} These simple WFSTs are \leihan{only used} for illustration purpose \sout{only}. 
The root grammar WFST shown in Fig.~\ref{fig:WFST_root} is converted from a bi-gram trained from a single sentence indicating `Play SONG-SLOT' in Chinese. 
The bi-gram corresponding to the sub-grammar WFST in Fig.~\ref{fig:WFST_song} is trained from a single song name indicating `white bird' in Chinese. The symbol `\#0' is a disambiguation symbol in the root grammar WFST, and `\#SONG-SLOT-wd0' is a disambiguation symbol in sub-grammar WFST. 
Furthermore, if `SONG-SLOT' is the input over an arc, it is replaced by `\#SONG-SLOT' to be a disambiguation symbol, which creates the final HCLG.fst \cite{Mohri02}.
\label{Section3.2}

\subsection{Dynamic Decoding}
Our decoder structure is similar to what is applied in~\cite{Saon05}. During decoding, an on-the-fly re-scoring strategy similar to \cite{Hori07} is performed. Since we are using the class-based DLMs, some key information that we keep in the decoding progress include:
\begin{itemize}
    \setlength{\itemsep}{0pt}
    \setlength{\parsep}{0pt}
    \setlength{\parskip}{0pt}
    \item states corresponding to the first pass WFST;
    \item the language model ID being searched during the on-the-fly re-scoring;
    \item states of the current language model;
    \item a back-up state to backup the original state in the \emph{root} DLM.
\end{itemize}

We provide more details when the decoding progress switches between the root grammar and sub-grammars. When token enters the sub-grammar, (e.g, an output label `SONG-SLOT' is observed), the decoder switches to the corresponding sub-grammar, initializes the DLM state, backs-up the state in the root grammar, and then precedes the decoding progress. 
On ther other side, when the decoder leaves the sub-grammars, (e.g., an output label `\#SONG-SLOT' is observed), the DLM score corresponds to the end-of-sentence is added, and then the decoder switches to the backed up state of the root grammar and continues the decoding.
A quadruple is used to record all these information. 
During decoding, when a word-emitting arc is traversed, we look up the extension of the previous state in the corresponding DLM, and if the word is found, we combine this new DLM state with the current decoder state, DLM ID and the backup state to form a new quadruple.

\label{section3.3}

\section{Experiments}
In this section, we first conduct extensive experiments by comparing with the conventional approaches, such as \cite{Aleksic2015}, with small model size. 
For models with large size, we compare the performance of our method with common language model that is trained on a very large number of common corpora.

\subsection{Experimental setup}
The TDNN-LSTMP-LFMMI~\cite{Povey15} acoustic model used in the study was trained with six thousands of hours of general purpose Mandarin data consisting of mostly read speech. 
The model is then fine-tuned with two thousands of hours of internal data from the Tencent TingTing. 
The neural network has 7 TDNN layers interleaved with 4 LSTMP layers. 
The output target contains 9782 bi-phone senones obtained from a Mandarin syllable lexicon. 
For evaluation, two test sets are used: `voice\_speaker\_201812' contains 735 recent user queries to the Tencent TingTing intelligent speaker, while `voice\_speaker\_201901' contains 1000 similar queries. 
To demonstrate that our method does not harm the general purpose recognition task, another testing set `AI\_Lab\_test\_600' containing 600 regular read speech utterances is used. 
The language model is built as described in section \ref{section3.1}. 
% the root grammar is trained over a manually transcription of on-line voice search queries, and the sub grammars are trained over their own entities.  
All the n-gram language models are trained using SRILM \cite{Stockle02}.

\label{section4.1}

\subsection{Performance Consistency}
To demonstrate that our method does not loss any precision, we should use the original sub-grammars without pruning to build the baseline results. 
However, since the original sub-grammars are too big to be used directly, (e.g, very large sub-grammars trained over 20 million songs). 
These sub-grammars are first pruned moderately, following the similar pipeline in \cite{Aleksic2015}, to build the baseline system. 
To build our on-the-fly system, the slightly pruned sub-grammars are used as the formal grammars. 
Then, these grammars are further pruned to be inserted into the root grammar, and we then use the method described in section \ref{MainSec2} to build the DLMs accordingly. Finally, the algorithm proposed in section \ref{section3.3} is used to perform the on-the-fly re-scoring. 
\linespread{0.8}
\begin{table}[th]
   \caption{Consistency results}
   \label{tab:consistency}
    \centering
    \setlength{\tabcolsep}{1.0mm}
    \begin{tabular}{|c|c|c|c|}
      \hline
      \hline
      & \textbf{Baseline\cite{Aleksic2015}} & \textbf{rescore(sub)} & \textbf{rescore(root+sub)} \\
      \hline
       \textbf{spk\_201812} & $4.3$  &  $4.3$  &  $4.2$     \\
       \textbf{spk\_201901} & $9.6$ & $9.4$ & $9.2$  \\
    \hline
    \hline
    \end{tabular}
\end{table}

Table~\ref{tab:consistency} reports the results showing the performance consistency between our method and the baseline method. 
The last two columns of Table~\ref{tab:consistency} stand for the on-the-fly re-scoring results. Column 2 indicates the results where only the sub-grammars are pruned, while column 3 shows the results when all grammars are pruned. Results show that our method achieves the same accuracy as that as baseline on the test set `spearker\_201812' and even better results in `speaker\_201901'. 
A reasonable explanation for this is that the on-the-fly approach has a much smaller decoder graph compared with the baseline, so it might be more straightforward to reach the correct LM score, and the unreliable decoding path can be cut more efficiently, leading to better results.

\subsection{In-Domain and Out-Domain Testing}
The experimental results in Table~\ref{tab:performance} show the performance after interpolating with common language model. 
The first column records the result of decoding with only root grammar, the second column records the result of decoding with common language model, while the last column stands for the result after interpolating. 
As clearly observed, we achieve significant performance improvements, approaching a rate of $10\%-15\%$ relatively, on in-domain sets, while the performance on out-domain sets decays slightly.
\begin{table}[th]
   \caption{On-line voice search results}
   \label{tab:performance}
    \centering
    \begin{tabular}{|c|c|c|c|}
      \hline
      \hline
      & \textbf{root} & \textbf{common} & \textbf{root + common} \\
      \hline
       \textbf{speaker\_201812} & $4.3$  &  $12.9$  &  $\mathbf{3.7}$     \\
       \textbf{speaker\_201901} & $9.4$ & $17.1$ & $\mathbf{8.6}$  \\
       \textbf{ai\_lab\_test600} & $30.3$ & $11.1$ & $11.4$  \\
    \hline
    \hline
    \end{tabular}
\end{table}
To compare the decoding time cost, \emph{rtfs} on these test sets are reported. The on-the-fly method has similar decoding speed on the in-domain test sets compared to the common language model. On the out-domain test sets, the common language model shows faster decoding speed mainly because of paths with less confusion.
\linespread{0.4}
\begin{table}[th]
   \caption{Real time factor}
   \label{tab:rtf}
    \centering
    \begin{tabular}{|c|c|c|c|}
      \hline
      \hline
      & \textbf{root} & \textbf{common} & \textbf{root + common} \\
      \hline
       \textbf{speaker\_201812} & $0.1$  &  $0.1$  &  $0.11$     \\
       \textbf{speaker\_201901} & $0.11$ & $0.11$ & $0.12$  \\
       \textbf{ai\_lab\_test600} & $0.21$ & $0.17$ & $0.23$  \\
    \hline
    \hline
    \end{tabular}
\end{table}
\subsection{Size Reduction and Efficient Updating}
As mentioned in the above sections, when the original sub-grammars without pruning are inserted directly into the root grammar, the resulting G.fst becomes extremely large, i.e., with a size of $46G$. 
However, after pruning, the size of G.fst is reduced to $938M$. 
This implies that the proposed method is able to add considerable large items into the system by pruning the sub-grammars. 
The remaining task is to recalculate the DLMs of new sub-grammars, leaving the decoder graph unchanged. 
All of the computations can be done in a few minutes, and thus we can easily catch up with newly arrived items. 

\section{Conclusions and Future Work}
In this paper, we have proposed a phrase-level class based language model and an on-the-fly re-scoring method to address the `hot' word problem. 
Using this framework, we can catch up with new entities precisely and efficiently. 
In addition, experimental results show that we achieve significant performance improvements compared to the common language model. 
In our future work, we are interested to remove the replacement procedure to make the pipeline simpler.

% \section{Acknowledgements}
% We will thank a lot to Jie Chen for the prepare of the test datasets. Thanks for Zhao You ,Lixin Cao and Gui Jin for their help in data preparation. Thank for Shiyin Kang for the advice in writing this paper.  

\clearpage

\bibliographystyle{IEEEtran}

%\bibliography{mybib}

\end{document}